\documentclass{article} 
\usepackage{iclr2024_conference,times} 

\usepackage{microtype}
\usepackage{graphicx}
\usepackage{subfigure}
\usepackage{booktabs} 
\usepackage{xcolor}

\usepackage[pagebackref=true,breaklinks=true,colorlinks,bookmarks=false]{hyperref}
\definecolor{mydarkgreen}{rgb}{0,1,0.18}
\definecolor{mydarkblue}{rgb}{0,0.18,1}
 \hypersetup{
 linkcolor=red,
 filecolor=red,
 citecolor=green,      
 urlcolor=cyan,
}


\usepackage{algorithm}
\usepackage{algorithmic}

\usepackage{pgfplots}
\pgfplotsset{compat = newest}
\usepackage{multirow}

\usepackage{amsmath}
\usepackage{amssymb}
\usepackage{mathrsfs}
\usepackage{mathtools}
\usepackage{amsthm}

\usepackage[capitalize,noabbrev]{cleveref}

\theoremstyle{plain}
\newtheorem{theorem}{Theorem}[section]

\newtheorem{lemma}[theorem]{Lemma}

\theoremstyle{definition}
\newtheorem{definition}[theorem]{Definition}

\theoremstyle{remark}


\newcommand{\alink}[1]{\href{#1}{paper-link}}

\newcommand{\bfW}{\mathbf{W}}
\newcommand{\bfX}{\mathbf{X}}

\newcommand{\bfZ}{\mathbf{Z}}

\newcommand{\bfpi}{\boldsymbol{\pi}}

\newcommand{\bA}{\mathbf{A}}
\newcommand{\ai}{\mathbf{a}_i}

\newcommand{\bB}{\mathbf{B}}
\newcommand{\bi}{\mathbf{b}_i}

\newcommand{\bfK}{\mathbf{K}}
\newcommand{\cX}{\mathcal{X}}
\newcommand{\cH}{\mathcal{H}}
\newcommand{\cZ}{\mathcal{Z}}

\newcommand{\tsim}{\text{sim}}

\usepackage{enumitem}
\usepackage{ amssymb }

\definecolor{citecolor}{HTML}{0071BC}
\definecolor{linkcolor}{HTML}{ED1C24}
\hypersetup{colorlinks=true, linkcolor=linkcolor, citecolor=citecolor,urlcolor=black}

\usepackage{amsmath,amsfonts,bm}









\def\eqref#1{equation~\ref{#1}}









\def\1{\bm{1}}










\DeclareMathAlphabet{\mathsfit}{\encodingdefault}{\sfdefault}{m}{sl}
\SetMathAlphabet{\mathsfit}{bold}{\encodingdefault}{\sfdefault}{bx}{n}














\usepackage{hyperref}
\usepackage{url}
\usepackage{xcolor}

\title{
Contrastive Learning is Spectral Clustering on Similarity Graph
}

\author{Zhiquan Tan$^{1}$\thanks{Equal contribution.}
\quad Yifan Zhang$^{2*}$\hspace{2pt}\quad Jingqin Yang$^{2*}$\hspace{2pt} \quad Yang Yuan$^{2,3,4}$\thanks{Corresponding author.}\\
$^1$Department of Mathematical Sciences, Tsinghua University  $^2$IIIS, Tsinghua University\\
$^3$Shanghai Artificial Intelligence Laboratory $^4$Shanghai Qizhi Institute\\
\texttt{\{tanzq21, zhangyif21, yangjq21\}@mails.tsinghua.edu.cn},\\
\texttt{yuanyang@tsinghua.edu.cn}
}


%

\iclrfinalcopy

\begin{document}

\maketitle

\renewcommand{\thefootnote}{\fnsymbol{footnote}}
\setcounter{footnote}{2}


\begin{abstract}
Contrastive learning is a powerful self-supervised learning method, but we have a limited theoretical understanding of how it works and why it works. 
In this paper, we prove that contrastive learning with the standard InfoNCE loss is equivalent to spectral clustering on the similarity graph. Using this equivalence as the building block, we extend our analysis to the CLIP model and rigorously characterize how similar multi-modal objects are embedded together. Motivated by our theoretical insights, we introduce the Kernel-InfoNCE loss, incorporating mixtures of kernel functions that outperform the standard Gaussian kernel on several vision datasets\footnote{The code is available at \url{https://github.com/yifanzhang-pro/Kernel-InfoNCE}.}.
\end{abstract}

\section{Introduction}
Contrastive learning has emerged as one of the most prominent self-supervised learning methods, especially in the realm of vision tasks~\citep {chen2020simple,he2019momentum}. This approach trains a neural network to map a set of objects into an embedding space, ensuring that similar objects are closely positioned while dissimilar objects remain distanced. The InfoNCE loss, exemplified by SimCLR~\citep{chen2020simple}, is a widely employed loss function for achieving this goal.

In their inspiring work, \citet{haochen2021provable} demonstrated that by replacing the standard InfoNCE loss with their spectral contrastive loss, contrastive learning performs spectral clustering on the population augmentation graph. However, the spectral contrastive loss is seldom utilized in practice and is not applicable for analyzing the performance of various similarity functions in the embedding space. Furthermore, when employing the spectral contrastive loss, the final embedding constitutes a combination of standard spectral clustering and an additional linear transformation. Consequently, existing results do not establish a connection between the original InfoNCE loss and standard spectral clustering.

\begin{figure*}[htb]
\begin{center}
\resizebox{1.0\columnwidth}{!}{
\input{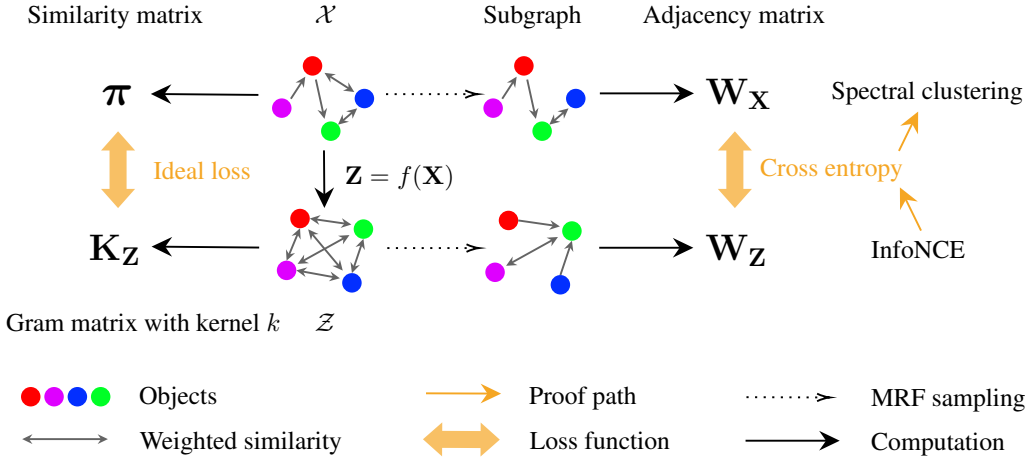}
}
\end{center}
\caption{An illustration of our analysis. The similarity matrix $\bfpi$ encapsulates the relationships between various images. Given the large size of the matrix, we employ a technique known as Markov Random Field sampling to sidestep the issue of direct utilization. Through our research, we discovered an equivalence between InfoNCE loss and a method known as spectral clustering when a Gaussian kernel function was utilized, thus validating our approach.}
\label{fig:illustration}
\end{figure*}

In this paper, we prove that SimCLR, the standard contrastive learning method, performs spectral clustering without modifying the InfoNCE loss or applying additional transformations to the embeddings. Our analysis involves a collection of $n$ objects $\bfX=[\bfX_1, \cdots, \bfX_n]$ within space $\cX$. For these objects, we define a similarity graph with an adjacency matrix $\bfpi$, such that $\bfpi_{i,j}$ represents the probability of $\bfX_i$ and $\bfX_j$ being paired together in the data augmentation step of contrastive learning. Notice that $\bfpi$ can be in general asymmetric. 

Given this similarity graph, we aim to discover an embedding function $f:\cX\rightarrow \cZ$. Denote $\bfZ \triangleq f(\bfX)$ as the embedding of $\bfX$, and our objective is to ensure that the Gram matrix $\bfK_\bfZ$ with kernel $k$ representing the similarities for $\bfZ$ closely approximates $\bfpi$. Please refer to Figure~\ref{fig:illustration} for an illustration.

However, directly comparing $\bfpi$ with $\bfK_\bfZ$ can be difficult, as there are too many edges in both graphs. Therefore, we define two Markov random fields (MRFs) based on $\bfpi$ and $\bfK_\bfZ$ and compare the MRFs instead. Each MRF introduces a probability distribution of unweighted directed subgraphs on $n$ objects~\citep{van2022probabilistic}, denoted as $\bfW_\bfX$ and $\bfW_\bfZ$ respectively. As a natural approximation to the ideal loss between $\bfpi$ and $\bfK_\bfZ$, we employ the cross-entropy loss between $\bfW_\bfX$ and $\bfW_\bfZ$. Our paper's surprising discovery is that the InfoNCE loss is equivalent to the cross-entropy loss when each subgraph is constrained to have an out-degree of exactly one. Furthermore, when $k$ is the Gaussian kernel, optimizing the cross-entropy loss corresponds to executing spectral clustering on $\bfpi$. By combining these two observations, we conclude that employing the InfoNCE loss is equivalent to performing spectral clustering.

Our characterization of contrastive learning hinges on two crucial factors: the augmentation step that defines a similarity graph, and the InfoNCE loss that measures the distance between two MRFs. Consequently, any other models incorporating these two factors can be similarly analyzed. Notably, the CLIP~\citep{radford2021learning} model for multi-modal learning fits within this paradigm. Utilizing the same framework, we establish a representation theorem for CLIP, demonstrating that it performs spectral clustering on the bipartite graph induced by the paired training data.

Is it possible to improve the InfoNCE loss by using a different kernel? Based on the maximum entropy principle, we demonstrate that the exponential kernels are the natural choices for capturing the local similarity structure for contrastive learning.
Empirically, we observe that 
taking the mixture of Gaussian and Laplacian kernels, which maintain the aforementioned properties, can achieve better performance than the Gaussian kernel on several benchmark vision datasets. 

In summary, our main contributions include:
\begin{itemize} [itemsep=-0.0pt, topsep=-0.5pt, leftmargin=0.5cm]
\item We prove the equivalence of SimCLR and spectral clustering on the similarity graph.

\item We extend our analysis to the multi-modal setting and prove the equivalence of CLIP and spectral clustering on the multi-modal similarity graph.

\item Inspired by theory, we propose a new Kernel-InfoNCE loss with mixture of kernel functions that achieves better performance than the standard Gaussian kernel (SimCLR) empirically on the benchmark vision datasets. 
\end{itemize}

\section{Background}
{{In this section, we will introduce the basic knowledge we will use throughout the paper.}} In this paper, we use objects to denote data points like images or texts. 
Given a matrix $\bfX$, we use $\bfX_i$ to denote its $i$-th row, and $\bfX_{i,j}$ to denote its $(i,j)$-th entry. Same holds for matrices like $\bfW_\bfX$, where we use $\bfW_{\bfX,i}$ and $\bfW_{\bfX,i,j}$, respectively. 

\subsection{Contrastive learning: SimCLR}
Given a query object $\mathbf{q}\in \cX$, 
\textbf{one} similar object (positive samples) $\mathbf{p}_1$ for $\mathbf{q}$, and $N-1$ other objects $\{\mathbf{p}_i\}_{i=2}^{N}$, 
SimCLR finds a function $\boldsymbol{f}$ (usually a neural network) that maps these objects to $\cZ$, to minimize the InfoNCE loss of $\mathbf{q}$:
\begin{equation}
\label{eqn:infonce}
\mathcal{L}(\mathbf{q}, \mathbf{p}_1, \{\mathbf{p}_i\}_{i=2}^{N})=-\log \frac{\exp(\tsim(\boldsymbol{f}(\mathbf{q}), \boldsymbol{f}(\mathbf{p}_1))/\tau)}{\sum_{i=1}^N
\exp(\tsim(\boldsymbol{f}(\mathbf{q}), \boldsymbol{f}(\mathbf{p}_i))/\tau)
}
\end{equation}
Here, the actual loss of $\boldsymbol{f}$ takes the summation over different $\mathbf{q}$\textcolor{blue}{s}, and $\tau$ is a temperature hyperparameter. 
The $\tsim(\bfZ_i, \bfZ_j)$ function measures the similarity between $\bfZ_i, \bfZ_j$ in $\cZ$, and is commonly  defined as $\tsim(\bfZ_i,  \bfZ_j)=\frac{\bfZ_i^\top \bfZ_j }{\|\bfZ_i\|\|\bfZ_j\|} $, 
or $\bfZ_i^\top \bfZ_j$, 
or $- \|\bfZ_i- \bfZ_j\|^2/2 $. 
In this paper, we consider the case that $\cZ$ is the unit sphere, i.e., $\|\bfZ_i\|=\|\bfZ_j\|=1$. 
This is because both SimCLR and CLIP have a normalization step in the implementation~~\citep{chen2020simple, radford2021learning}.
Hence, $
\frac{\bfZ_i^\top \bfZ_j}{\|\bfZ_i\|\|\bfZ_j\|}=
\bfZ_i^\top \bfZ_j$, and 
\begin{equation}
\label{eqn:sim_equivalence}
-\|\bfZ_i- \bfZ_j\|^2/2= -\bfZ_i^2/2 - \bfZ_j^2/2 + \bfZ_i^\top \bfZ_j = 
-1+\bfZ_i^\top \bfZ_j. 
\end{equation}
Therefore, these losses are the same up to a constant. 

\subsection{Multi-modal learning: CLIP}
CLIP~~\citep{radford2021learning} is a multi-modal model with a dataset containing millions of (image, text) pairs. 
During pretraining, for each batch of $N$ pairs of data points, CLIP uses an image encoder and a text encoder to get $N$ pairs of embeddings, and uses the InfoNCE loss to compute the correct $N$ pairs out of  $N\times N$ possible connections. Specifically, given an image $\ai$, we compare the its matching score of the paired text $\bi$, with the matching scores of other $N-1$ texts $\{\mathbf{b}_j\}_{j\neq i}$, using the loss $\mathcal{L}(\ai, \bi, \{\mathbf{b}_j\}_{j\neq i})$ defined in Eqn.~(\ref{eqn:infonce}) by setting $\tsim(\bfZ_i,  \bfZ_j)=\frac{\bfZ_i^\top \bfZ_j }{\|\bfZ_i\|\|\bfZ_j\|} $. One can define the loss similarly for text, and the actual loss of the embedding network $\boldsymbol{f}$ takes the summation over all the images and texts. 

\subsection{Reproducing Kernel Hilbert Space}
\label{sec:rkhs}
Given two objects $\bfZ_i, \bfZ_j 
\in \cZ$, consider a feature map $\varphi: \cZ\rightarrow \cH$, where the feature space $\cH$ is usually much larger than $\cZ$. We may define a kernel $k$ that measures the similarity of $\bfZ_i, \bfZ_j$ as $k(\bfZ_i, \bfZ_j)\triangleq \langle \varphi(\bfZ_i), \varphi(\bfZ_j)\rangle_\cH$, i.e., the inner product between the two objects after mapping them to the feature space. For any vector $h\in \cH$, it also corresponds to a function $h(\cdot): \cZ\rightarrow \mathbb{R}$, defined as $h(\bfZ_i)=\langle h, \varphi(\bfZ_i)\rangle_\cH$.
Specifically,
$\varphi(\bfZ_j)$ as a vector in $\cH$ represents the function $k(\cdot, \bfZ_j): \cZ\rightarrow \mathbb{R}$, because for any $\bfZ_i\in \cZ$, we have $k(\bfZ_i, \bfZ_j)=\langle \varphi(\bfZ_i), \varphi(\bfZ_j)\rangle_\cH$. Formally, we have:
\begin{definition}[Reproducing kernel Hilbert space]
\label{def:rkhs}
Let $\cH$ be a Hilbert space of $\mathbb{R}$-valued functions defined on a non-empty set $\cZ$. A function $k:\cZ\times \cZ \rightarrow \mathbb{R}$ is called a reproducing kernel of $\cH$, and $\cH$ is a
reproducing kernel Hilbert space, if $k$ satisfies
\begin{itemize}[parsep=0cm, topsep=0cm]
    \item $\forall \bfZ_i\in \cZ, k(\cdot, \bfZ_i)\in \cH$,
    \item $\forall \bfZ_i\in \cZ, \forall h\in \cH, \langle h, k(\cdot, \bfZ_i)\rangle_\cH=h(\bfZ_i).$
\end{itemize}    
\end{definition}

We focus on the translation-invariant kernel in our paper, where the kernel $k(\bfZ_i,\bfZ_j)$ can always be written as $k'(\bfZ_i-\bfZ_j)$ for $k'\in \cZ\rightarrow \mathbb{R}$. The Moore–Aronszajn's theorem states that if $k$ is a symmetric, positive definite kernel on $\cZ$, there is a unique Hilbert space of functions $\cH$ on $\cZ$ for which $k$ is a reproducing kernel.

For instance, the Gaussian kernel is a symmetric, positive definite kernel that yields an RKHS with infinite dimensions. One of the advantages of a reproducing kernel is that the similarity can be computed directly in $\cZ$ without using the feature map to go to the potentially infinite dimensional Hilbert space. However, a reproducing kernel's similarity structure should ideally align with the semantic meanings of specific tasks. For example, it is unlikely to calculate the semantic similarity of two images directly using a predefined reproducing kernel in the pixel space.

Consequently, we ask if it is possible to find an embedding function $\boldsymbol{f}:\cX\rightarrow \cZ$, where $\cZ$ can compute the similarity of two objects in $\cX$ with a predefined kernel function, i.e., whether $\bfK_\bfZ$ matches with $\bfpi$ in Figure~\ref{fig:illustration}. 
In other words, we hope to map
the objects to a space where the semantic similarity in $\cX$ is naturally embedded. This is the starting point of our paper.

\subsection{Markov random field}
 In this subsection, we present the framework (without proofs) of MRF for dimension reduction~~\citep{van2022probabilistic}. 
We have modified some definitions and lemmas for our learning scenarios, and the readers may check the paper for more details on this framework. 

Consider $n$ objects $\bfZ=[\bfZ_1, \cdots, \bfZ_n]$ in $\mathcal{Z}$. We use a symmetric and translation invariant kernel  $k: \cZ\rightarrow \mathbb{R}_+$ to represent the similarities in $\cZ$, where
symmetric means $k(\mathbf{x})=k(-\mathbf{x})$. Given $\bfZ$ and $k$, 
we define the gram matrix as $\bfK_\bfZ\triangleq (k(\bfZ_i-\bfZ_j))_{(i,j)\in [n]^2}$, which is also the adjacency matrix representing the similarities of objects in $\bfZ$. 

{Due to the large size of $\bfpi$ and in practice $\bfpi$ is usually formed by using positive samples sampling and hard to explicitly construct,} directly comparing $\bfK_\bfZ$ and $\bfpi$ can be difficult, so we treat them as MRFs and compare the induced probability distributions on subgraphs instead.
In our paper, 
subgraphs are directed unweighted graphs from the set $S_\bfW\triangleq 
\{\bfW\in \{0,1\}^{n\times n} ~|~ \forall (i,j)\in [n]^2, \bfW_{i,i}=0\}$. 
The distribution of $\bfW$ is generally defined as follows. 

\begin{definition}[Distribution of $\bfW$]
\label{def:prior_w}
Let $\bfpi\in \mathbb{R}_+^{n\times n}$, we define the distribution $
\mathbb{P} (\bfW;\bfpi) \propto 
\Omega(\bfW) 
\Pi_{(i,j)\in [n]^2} \bfpi_{i,j}^{\bfW_{i,j}}$, 
where
$\Omega(\bfW)\triangleq \Pi_i \mathbb{I}_{\sum_j \bfW_{i,j}=1}$ {is called a unitary out-degree filter}.
\end{definition}

\begin{figure*}
\begin{center}
\input{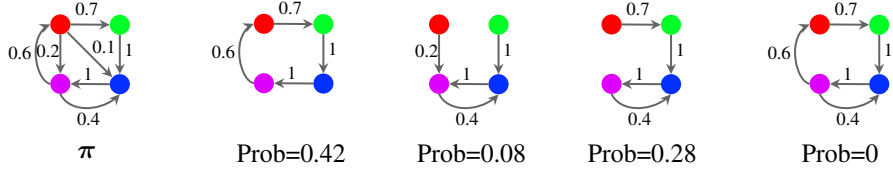}
\end{center}
\caption{Sampling probabilities of the subgraphs defined by $\mathbb{P} (\bfW;\bfpi)$. 
The first subfigure represents the underlying graph $\bfpi$, the next three subfigures represent three different subgraphs with their sampling probabilities. The last subfigure has sampling probability $0$ because the purple node has out-degree larger than $1$. 
}
\label{fig:mrf-distribution}
\end{figure*}

To provide a clearer interpretation of the definition, we can break down the expression $\Omega(\bfW) 
\Pi_{(i,j)\in [n]^2} \bfpi_{i,j}^{\bfW_{i,j}}$ into two parts. 
{Firstly, if we view $\bfW$ as the adjacency of a graph, the unitary out-degree filter $\Omega(\bfW)$ checks if each node $i$ of the graph has exactly one out-going edge.} Therefore, only subgraphs with a unitary out-degree will be preserved, while subgraphs with other out-degree values will be filtered out. As we will see later, this exactly corresponds to the setting that the InfoNCE loss uses exactly one positive neighbor. 
Secondly, $\Pi_{(i,j)\in [n]^2} \bfpi_{i,j}^{\bfW_{i,j}}$ multiplies the scores of each edge in $\bfpi$ compared with $\bfW_{i,j}$. This multiplication results in the un-normalized likelihood of $\bfW$ under $\bfpi$, it is noticeable that the constraint of single outgoing edge of each node that ensures that the multiplication results reflect the consistent of $\bfW$ and $\bfpi$. See Figure~\ref{fig:mrf-distribution} for an illustration. 

By applying Definition~\ref{def:prior_w} to $\bfK_\bfZ$, we obtain the following expression for $\mathbb{P} (\bfW;\bfK_\bfZ)$: $\mathbb{P} (\bfW;\bfK_\bfZ) \propto \Omega(\bfW) \Pi_{(i,j)\in [n]^2}k(\bfZ_i-\bfZ_j)^{\bfW_{i,j}}$. This expression represents the prior probability of $\bfW$ under $\bfK_\bfZ$. 

Due to the unitary out-degree filter, $\mathbb{P} (\bfW;\bfpi)$  has the following property.

\begin{lemma}
\label{lem:multinomial}
For $\bfW\sim \mathbb{P}(\cdot; \bfpi)$, 
$\forall i \in [n], \bfW_i\sim \mathcal{M}(1, \bfpi_i/\sum_j \bfpi_{i,j})$, where $\mathcal{M}$ is the multinomial distribution. Moreover, given any $i,i'\in [n]$, $\bfW_i$ is independent to~$\bfW_{i'}$.  Where $\bfW_i$ is the i-th row of $\bfW$, $\bfpi_{i}$ is the $i$-th row of $\bfpi$.
\end{lemma}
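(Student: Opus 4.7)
The plan is to exploit the fact that both ingredients of $\mathbb{P}(\bfW;\bfpi)$, namely the indicator $\Omega(\bfW)$ and the product $\prod_{(i,j)\in[n]^2}\bfpi_{i,j}^{\bfW_{i,j}}$, factorize along rows of $\bfW$. Once this is made explicit, both the multinomial marginal and the row independence drop out as consequences of the same factorization.

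First I would rewrite the unnormalized density as
\[
\Omega(\bfW)\,\prod_{(i,j)\in[n]^2}\bfpi_{i,j}^{\bfW_{i,j}}\;=\;\prod_{i\in[n]}\Bigl(\mathbb{I}_{\sum_j \bfW_{i,j}=1}\,\prod_{j\in[n]}\bfpi_{i,j}^{\bfW_{i,j}}\Bigr),
\]
i.e., a product of terms $g_i(\bfW_i)$ that depend only on row $i$. Next I would observe that the support constraint $\bfW\in S_\bfW$ also factorizes: membership in $S_\bfW$ is equivalent to $\bfW_{i,i}=0$ for every $i$, which is a per-row condition. Hence $\mathbb{P}(\bfW;\bfpi)\propto\prod_i g_i(\bfW_i)\mathbb{I}_{\bfW_{i,i}=0}$, so the normalizing constant likewise factorizes as $Z=\prod_i Z_i$ with $Z_i=\sum_{\bfw}\mathbb{I}_{\sum_j \bfw_j=1}\mathbb{I}_{\bfw_i=0}\prod_j\bfpi_{i,j}^{\bfw_j}=\sum_{j\neq i}\bfpi_{i,j}$.

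From the factorization $\mathbb{P}(\bfW;\bfpi)=\prod_i \tfrac{1}{Z_i}g_i(\bfW_i)\mathbb{I}_{\bfW_{i,i}=0}$, I would immediately read off mutual independence of the rows $\bfW_1,\ldots,\bfW_n$, which covers the second assertion. For the first assertion, I would fix $i$ and note that $g_i(\bfW_i)\mathbb{I}_{\bfW_{i,i}=0}$ is supported on the $n-1$ standard basis vectors $\mathbf{e}_j$ with $j\neq i$, and at $\bfW_i=\mathbf{e}_j$ it evaluates to $\bfpi_{i,j}$. Dividing by $Z_i=\sum_{j\neq i}\bfpi_{i,j}$ gives $\mathbb{P}(\bfW_i=\mathbf{e}_j)=\bfpi_{i,j}/\sum_{k}\bfpi_{i,k}$, which is the stated $\mathcal{M}(1,\bfpi_i/\sum_j\bfpi_{i,j})$ law (the convention that $\bfpi_{i,i}$ does not contribute is enforced by the support $S_\bfW$; under the standard assumption $\bfpi_{i,i}=0$ this is automatic).

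There is no real obstacle here; the whole lemma is a direct consequence of the product structure of $\Omega$ and of the edge-weight product. The only thing to be careful about is the diagonal: one must make sure the sum in the statement is understood to exclude (or zero out) $\bfpi_{i,i}$, since the constraint $\bfW_{i,i}=0$ removes that mass from the per-row normalizer. A brief sentence on this convention is all that is needed to keep the statement of the multinomial parameter consistent with Definition~\ref{def:prior_w}.
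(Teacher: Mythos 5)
Your proof is correct: the per-row factorization of $\Omega(\bfW)\prod_{i,j}\bfpi_{i,j}^{\bfW_{i,j}}$ together with the per-row support constraint $\bfW_{i,i}=0$ immediately gives both the independence of rows and the categorical (one-trial multinomial) law for each row, and your remark that the normalizer excludes the diagonal (so the stated parameter $\bfpi_i/\sum_j\bfpi_{i,j}$ presumes $\bfpi_{i,i}=0$ or a diagonal-excluding convention) is exactly the right point of care, consistent with how the paper later uses the lemma (the expansion sums over $j\neq i$). The paper itself states this lemma without proof, importing it from the cited MRF dimension-reduction framework of \citet{van2022probabilistic}, and your factorization argument is the standard one that the reference relies on, so there is nothing to revise.
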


Below we define the cross entropy loss given distribution $\bfpi$ and the similarity matrix $\bfK_\bfZ$. 
\begin{equation}
\label{eqn:cross-entropy}
\mathcal{H}^k_{\bfpi}(\bfZ)\triangleq -\mathbb{E}_{\bfW_\bfX \sim \mathbb{P}(\cdot; \bfpi)}[\log \mathbb{P}(\bfW_\bfZ=\bfW_\bfX;\bfK_\bfZ)]
\end{equation}
The following lemma will be helpful in analyzing the cross-entropy loss, which states that when the two distributions can be aligned and decomposed, their cross-entropy loss can also be decomposed. 
\begin{lemma}
\label{lem:cross_split}
Assume $\mathcal{X}= \mathcal{X}_1 \times \cdots \times \mathcal{X}_k$ and there are two probability distributions $\mathbf{P}$ and $\mathbf{Q}$ supported on $\mathcal{X}$. Suppose $\mathbf{P} = \mathbf{P}_1 \otimes \cdots \otimes \mathbf{P}_k$ and $\mathbf{Q} = \mathbf{Q}_1 \otimes \cdots\otimes \mathbf{Q}_k$, with $\mathbf{P}_i$ and $\mathbf{Q}_i$ supported on $\mathcal{X}_i$. 
Let $\mathcal{H}(\mathbf{P},\mathbf{Q})\triangleq - 
\mathbb{E}_{x\sim \mathbf{P}} 
[\log \mathbf{Q}(x)]$.
Then $\mathcal{H}(\mathbf{P},\mathbf{Q})=\sum^{k}_{i=1} \mathcal{H}(\mathbf{P}_i,\mathbf{Q}_i)$.
\end{lemma}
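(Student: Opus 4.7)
The plan is to exploit the factorization of $\mathbf{Q}$ to convert the logarithm of a product into a sum of logarithms, then use linearity of expectation together with the fact that the $i$-th marginal of the product measure $\mathbf{P}$ is exactly $\mathbf{P}_i$. This reduces the statement to pure bookkeeping; there is no analytic difficulty.

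Concretely, I would first write a generic point of $\mathcal{X}$ as $x=(x_1,\ldots,x_k)$ with $x_i\in \mathcal{X}_i$. Because $\mathbf{Q}=\mathbf{Q}_1\otimes\cdots\otimes\mathbf{Q}_k$, its density (or mass) factors as $\mathbf{Q}(x)=\prod_{i=1}^k \mathbf{Q}_i(x_i)$, and hence $\log \mathbf{Q}(x)=\sum_{i=1}^k \log \mathbf{Q}_i(x_i)$. Substituting this into the definition of $\mathcal{H}(\mathbf{P},\mathbf{Q})$ and applying linearity of expectation yields
$$\mathcal{H}(\mathbf{P},\mathbf{Q}) = -\sum_{i=1}^k \mathbb{E}_{x\sim \mathbf{P}}\bigl[\log \mathbf{Q}_i(x_i)\bigr].$$

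Second, I would observe that for each $i$ the integrand $\log \mathbf{Q}_i(x_i)$ depends only on the $i$-th coordinate. Since $\mathbf{P}=\mathbf{P}_1\otimes\cdots\otimes\mathbf{P}_k$, its $i$-th marginal is $\mathbf{P}_i$, so the expectation of a function of $x_i$ alone under $\mathbf{P}$ equals the expectation under $\mathbf{P}_i$. Thus each summand is exactly $\mathcal{H}(\mathbf{P}_i,\mathbf{Q}_i)$, and summing over $i$ gives the claim.

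The only thing to flag is a minor notational subtlety: if some of the $\mathcal{X}_i$ are continuous, the symbol $\mathbf{Q}(x)$ should be read as a Radon--Nikodym density with respect to a product base measure, and the argument goes through verbatim because $\mathbf{P}$ and $\mathbf{Q}$ are assumed to be product measures with the same factorization. I do not expect any step to be a genuine obstacle; the lemma is essentially a restatement of the fact that entropy-like functionals are additive over independent components.
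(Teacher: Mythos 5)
Your proof is correct: factoring $\log \mathbf{Q}(x)=\sum_{i}\log \mathbf{Q}_i(x_i)$, applying linearity of expectation, and using that the $i$-th marginal of the product measure $\mathbf{P}$ is $\mathbf{P}_i$ is exactly the standard argument, and your remark about reading $\mathbf{Q}(x)$ as a density in the continuous case handles the only subtlety. The paper itself states this lemma without proof (deferring to the cited MRF framework), so there is nothing to contrast with; your write-up supplies precisely the bookkeeping the paper omits.
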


{The next lemma shows that} the cross-entropy loss can be converted to the combination of repulsion and attraction terms. 

\begin{lemma}
\label{lem:convert_to_spectral}
$\min_{\bfZ} \mathcal{H}^k_{\bfpi}(\bfZ)$ is equivalent to 
\begin{equation}
\label{eqn:spectral}
\min_{\bfZ}
-\sum_{(i,j)\in [n]^2} \mathbf{P}_{i,j}
\log k (\bfZ_i-\bfZ_j)+\log \mathbf{R}(\bfZ), 
\end{equation}
where $\mathbf{P}=\mathbb{E_{\bfW_\bfX\sim \mathbb{P}(\cdot; \bfpi)}}[\bfW_\bfX]$, and 
$\mathbf{R}(\bfZ)=\sum_{\bfW \in 
S_\bfW} \mathbb{P}(\bfZ,\bfW)
$ with $\mathbb{P}(\bfZ,\bfW)\propto f_k(\bfZ, \bfW)\Omega(\bfW)$. 
\end{lemma}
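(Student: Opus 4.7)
The plan is to perform a direct expansion of the cross entropy using the explicit form of the MRF distribution given in Definition~\ref{def:prior_w}. First I would make the normalization constant explicit by writing $\mathbb{P}(\bfW_\bfZ=\bfW;\bfK_\bfZ) = \Omega(\bfW)\, f_k(\bfZ,\bfW) / \mathbf{R}(\bfZ)$, where $f_k(\bfZ,\bfW) \triangleq \prod_{(i,j)\in[n]^2} k(\bfZ_i-\bfZ_j)^{\bfW_{i,j}}$ is the attractive factor and $\mathbf{R}(\bfZ) = \sum_{\bfW \in S_\bfW} f_k(\bfZ,\bfW)\,\Omega(\bfW)$ is the partition function, exactly matching the definition of $\mathbf{R}$ in the statement.

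Next, I take the logarithm and split it into three pieces: (i) the pairwise attraction term $\sum_{(i,j)} \bfW_{i,j}\log k(\bfZ_i-\bfZ_j)$ coming from $\log f_k(\bfZ,\bfW)$; (ii) the constraint indicator $\log \Omega(\bfW)$; and (iii) the global repulsion term $-\log \mathbf{R}(\bfZ)$. Plugging this into the definition $\mathcal{H}^k_{\bfpi}(\bfZ) = -\mathbb{E}_{\bfW_\bfX \sim \mathbb{P}(\cdot;\bfpi)}[\log \mathbb{P}(\bfW_\bfZ = \bfW_\bfX;\bfK_\bfZ)]$ and using linearity of expectation turns the first piece into $-\sum_{(i,j)} \mathbb{E}[\bfW_{\bfX,i,j}] \log k(\bfZ_i-\bfZ_j) = -\sum_{(i,j)} \mathbf{P}_{i,j} \log k(\bfZ_i-\bfZ_j)$ by the definition $\mathbf{P} = \mathbb{E}[\bfW_\bfX]$, while the third piece is deterministic in $\bfZ$ and yields $+\log \mathbf{R}(\bfZ)$.

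The only real subtlety is handling the $\log \Omega(\bfW_\bfX)$ term. Since $\mathbb{P}(\cdot; \bfpi)$ is supported on $\{\bfW \in S_\bfW : \Omega(\bfW) = 1\}$ (otherwise the unnormalized mass is zero), we have $\Omega(\bfW_\bfX) = 1$ almost surely, hence $\mathbb{E}[\log \Omega(\bfW_\bfX)] = 0$ and this piece simply drops out. This is the step I expect to require the most care — one cannot naively take the log of an indicator, so the argument must be phrased as restricting the expectation to the support where $\Omega \equiv 1$.

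Combining the three contributions gives $\mathcal{H}^k_{\bfpi}(\bfZ) = -\sum_{(i,j)\in[n]^2} \mathbf{P}_{i,j} \log k(\bfZ_i-\bfZ_j) + \log \mathbf{R}(\bfZ)$, which is an equality of objectives in $\bfZ$ (not merely up to additive constants independent of $\bfZ$), so the two minimization problems are literally the same and the claimed equivalence $\min_\bfZ \mathcal{H}^k_\bfpi(\bfZ) \Leftrightarrow$ Eqn.~(\ref{eqn:spectral}) follows immediately. No additional structure (such as Lemma~\ref{lem:cross_split} or Lemma~\ref{lem:multinomial}) is needed for this step, though those will presumably be invoked downstream to further decompose $\mathbf{P}$ and $\mathbf{R}(\bfZ)$ into per-row contributions that connect to the InfoNCE summands.
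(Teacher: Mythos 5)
Your proof is correct, and it is essentially the intended argument: the paper states Lemma~\ref{lem:convert_to_spectral} without proof (importing it from the cited MRF framework of \citet{van2022probabilistic}), and the direct route you take---write $\mathbb{P}(\bfW;\bfK_\bfZ)=\Omega(\bfW)f_k(\bfZ,\bfW)/\mathbf{R}(\bfZ)$ with $\mathbf{R}$ the partition function, expand the log, kill $\log\Omega(\bfW_\bfX)$ because the sampling distribution is supported on $\{\Omega=1\}$, and use linearity of expectation to produce $\mathbf{P}=\mathbb{E}[\bfW_\bfX]$---is exactly how that equivalence is obtained. The only caveat is notational: the paper defines $\mathbf{R}$ via a ``$\propto$'', so if one insists on that reading the identity holds up to an additive constant independent of $\bfZ$, which leaves the minimization equivalence untouched; under your (natural) reading of $\mathbf{R}$ as the normalizer, the equality of objectives is exact as you claim.
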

The second term in Eqn.~(\ref{eqn:spectral}) punishes trivial solutions like $\bfZ=\mathbf{0}$, as $\mathbf{0}$ is a mode for $f_k(\cdot, \bfW)$ for any $\bfW$, which incurs large $\log \mathbf{R}(\bfZ)$. The first term can be interpreted using the graph Laplacian operator, defined below.

\begin{definition}[Graph Laplacian operator]
The graph Laplacian operator is a function  $\mathbf{L}$ that maps a $n\times n$ non-negative matrix to a positive semi-definite matrix such that:
\[
\forall i,j \in [n]^2, \mathbf{L}(\bfW)_{i,j}=
\left\{
\begin{aligned}
-\bfW_{i,j}&~~~~ \mathrm{if} ~i\neq j  \\ 
\sum_{k\in [n]} \bfW_{i,k}&~~~~  \mathrm{o.w.}
\end{aligned}
\right.
\]
\end{definition}

{We will then introduce the definition of spectral clustering used in our paper.}

\begin{definition}[Spectral Clustering] Let $\bfW \in \mathbb{R}_+^{n\times n}$ be the adjacency matrix of a graph and $\mathbf{L}$ be the graph Laplacian operator. Then the following optimization problem is called performing spectral clustering on the graph whose adjacency matrix is $\bfW$:
\begin{equation}
\min_{\bfZ}  \mathrm{tr}(\bfZ^\top \mathbf{L}(\bfW) \bfZ) +  \text{E}(\bfZ),   
\end{equation}
where $ \text{E}(\bfZ)$ is a regularization term.    
\end{definition}

By simple calculation, when $k$ is the Gaussian kernel, the first term in Eqn.~(\ref{eqn:spectral}) becomes $\mathrm{tr}(\bfZ^\top \mathbf{L}^* \bfZ)$ where $\mathbf{L}^* =\mathbb{E}_{\bfW_\bfX\sim \mathbb{P}(\cdot; \bfpi)}[\mathbf{L}(\bfW_\bfX)]$. 
In other words, 
Eqn.~(\ref{eqn:spectral}) is equivalent to doing spectral clustering with a repulsion regularizer $\log \mathbf{R}(\bfZ)$.

\section{Constrastive Learning: SimCLR} \label{simclr spectral clustering}
{In this section, we will prove our main theorem that contrastive learning is spectral clustering on a similarity graph.} We assume that there are finitely many objects in $\cX$, denoted as $n$. This is the same assumption used by~\citet{haochen2021provable}, who also demonstrated that the finite case can be easily extended to the infinite case by 
replacing sum by
integral, adjacency matrix by adjacency operator, etc.
For continuous augmentation methods like adding Gaussian noise, we can discretize it in a natural way. 
Assuming a finite number of objects can help us avoid non-essential technical jargon. 

With $n$ objects in $\cX$, consider a similarity graph defined on these objects, which gives a similarity matrix $\bfpi$ of size  $n\times n$. However, for real scenarios like learning images, it is extremely difficult to obtain such $\bfpi$ from human labeling. Therefore, we compute $\bfpi$ using the prior knowledge of the dataset.  For example, in the original SimCLR paper~\citep{chen2020simple}, 
there are $9$ different augmentation methods. Each augmentation method may generate many different augmented images that look similar to the original image. 
For every original image $\bfX_i$, we define a probability distribution $\bfpi_i$, such that each object $\bfX_j$ gets sampled with probability $\bfpi_{i, j}$. {For example, during the augmentation process, suppose $\bfX_j$ has a probability, say $1/9$, to be an augmentation of $\bfX_i$, then $\bfpi_{i, j}=1/9$.}
Therefore, $\bfpi_i$ can be represented as a vector in $\mathbb{R}^n_+$.

Stacking all probability distributions $\bfpi_i$ together for $i\in [n]$, we get a matrix $\bfpi \in \mathbb{R}_+^{n\times n}$.  
In this section, we assume the sampling process is symmetric, i.e.,  $\bfpi_{i,j}=\bfpi_{j,i}$. 
The stochastic data augmentation samples $\bfW_\bfX$ based on $\bfpi$, i.e., $\bfW_\bfX\sim \mathbb{P}(\cdot;\bfpi)$. 

\subsection{Main Theorem}
\begin{theorem}
\label{thm:main}
For the SimCLR algorithm, denote $f$ as the neural network, $\bfZ:= f(\bfX)$, and 
$\bfpi$ as the similarity graph defined by the data augmentation process where objects are connected iff they are positive samples of each other. 
Then SimCLR is equivalent to solving the following program:
\[
\min_\bfZ \mathrm{tr}(\bfZ^\top \mathbf{L}(\bfpi) \bfZ)
+\log \mathbf{R}(\bfZ),
\]
which runs spectral clustering on $\bfpi$. 
\end{theorem}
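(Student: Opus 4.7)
The plan is to connect the SimCLR InfoNCE loss to the cross entropy loss $\mathcal{H}^k_{\bfpi}(\bfZ)$ of Eqn.~(\ref{eqn:cross-entropy}), and then to leverage \Cref{lem:convert_to_spectral} to reach the spectral clustering form. The overall chain of equalities is: InfoNCE $\equiv$ cross entropy between MRF distributions $\equiv$ attraction + repulsion decomposition $\equiv$ Laplacian quadratic form plus $\log \mathbf{R}(\bfZ)$.

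First, I would make the link between the expected SimCLR loss and $\mathcal{H}^k_{\bfpi}(\bfZ)$ explicit. By \Cref{lem:multinomial}, a sample $\bfW_\bfX \sim \mathbb{P}(\cdot;\bfpi)$ factorizes across rows as independent one-hot multinomials with parameters $\bfpi_i/\sum_j \bfpi_{i,j}$; the same factorization holds for $\mathbb{P}(\cdot;\bfK_\bfZ)$. By \Cref{lem:cross_split}, the cross entropy therefore splits over rows, and for each row the conditional log-probability of the sampled positive $j$ under $\mathbb{P}(\cdot;\bfK_\bfZ)$ is $\log\bigl(k(\bfZ_i-\bfZ_j) / \sum_{j'\neq i} k(\bfZ_i-\bfZ_{j'})\bigr)$. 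Choosing the Gaussian kernel $k(\bfZ_i-\bfZ_j) = \exp(-\|\bfZ_i-\bfZ_j\|^2/(2\tau))$ and using Eqn.~(\ref{eqn:sim_equivalence}) to convert the squared distance to $\bfZ_i^\top \bfZ_j$, the constant offsets cancel between numerator and denominator, leaving exactly the InfoNCE summand in Eqn.~(\ref{eqn:infonce}). Summing over $i$ and taking the expectation over the augmentation sampling recovers $\mathcal{H}^k_{\bfpi}(\bfZ)$ on the nose. This is the step I expect to be the heart of the proof.

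Second, I would invoke \Cref{lem:convert_to_spectral} to rewrite $\min_\bfZ \mathcal{H}^k_{\bfpi}(\bfZ)$ as
\[
\min_{\bfZ} -\sum_{(i,j)\in [n]^2} \mathbf{P}_{i,j}\log k(\bfZ_i-\bfZ_j) + \log \mathbf{R}(\bfZ),
\]
with $\mathbf{P} = \mathbb{E}_{\bfW_\bfX\sim \mathbb{P}(\cdot;\bfpi)}[\bfW_\bfX]$. Plugging in the Gaussian kernel, the attraction term becomes $\tfrac{1}{2\tau}\sum_{i,j} \mathbf{P}_{i,j}\|\bfZ_i-\bfZ_j\|^2$, which is a standard Laplacian quadratic form equal to $\tfrac{1}{\tau}\mathrm{tr}\bigl(\bfZ^\top \mathbf{L}(\mathbf{P})\bfZ\bigr)$ after symmetrizing. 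Since the graph Laplacian operator $\mathbf{L}$ is linear in its argument, $\mathbf{L}(\mathbf{P}) = \mathbb{E}[\mathbf{L}(\bfW_\bfX)] = \mathbf{L}^*$, and by \Cref{lem:multinomial} the row-stochastic version of $\bfpi$ coincides with $\mathbf{P}$, so $\mathbf{L}(\mathbf{P})$ is $\mathbf{L}(\bfpi)$ up to the per-row normalization constants (which can be absorbed into the temperature $\tau$ or assumed w.l.o.g. by taking $\bfpi$ already row-normalized, consistent with its interpretation as an augmentation distribution).

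The main obstacle I anticipate is the first step: carefully matching the form of the InfoNCE loss (which is defined over a batch with one chosen positive) to the row-wise factorization of the MRF distribution, including a clean accounting of the self-exclusion $\bfW_{i,i}=0$, the normalization of $\bfpi$, and the cancellation of the $\|\bfZ_i\|^2$ and $\|\bfZ_j\|^2$ terms on the unit sphere via Eqn.~(\ref{eqn:sim_equivalence}). Once that identification is in hand, the remaining steps are an application of \Cref{lem:convert_to_spectral} and a direct computation of the Gaussian attraction term as a Laplacian quadratic form, both of which are essentially bookkeeping.
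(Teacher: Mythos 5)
Your proposal mirrors the paper's own two-step argument: it identifies the InfoNCE loss with the cross entropy $\mathcal{H}^k_{\bfpi}(\bfZ)$ via the row-wise multinomial factorization (Lemma~\ref{lem:multinomial} and Lemma~\ref{lem:cross_split}), then applies Lemma~\ref{lem:convert_to_spectral} and the Gaussian kernel to obtain $\mathrm{tr}(\bfZ^\top \mathbf{L}(\bfpi)\bfZ)+\log\mathbf{R}(\bfZ)$, using linearity of $\mathbf{L}$ and $\mathbb{E}[\bfW_\bfX]=\bfpi$. This is essentially the same proof as in the paper, with your normalization caveat resolved exactly as the paper does, since each row $\bfpi_i$ is already a probability distribution.
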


\begin{proof}
    Please refer to Appendix~\ref{proof:main}.
\end{proof}

\textbf{Discussions.} 
Empirically, the InfoNCE loss is applied to a large batch of the object, rather than all the $n$ objects that Theorem~\ref{thm:main} requires. This explains why SimCLR benefits from larger batch size, e.g. \citet{chen2020simple} use a batch size of 4096, and \citet{he2019momentum} use an even large memory bank for storing more samples. 

While using the same framework from ~\citep{van2022probabilistic}, our Theorem~\ref{thm:main} is significantly different from their results on dimension reduction from at least two aspects. Firstly, in the object space $\cX$, we have a predefined similarity graph $\bfpi$, but they were using a kernel matrix $\bfK_\bfX$ based on  $\bfX$ and a kernel $k_\cX$. This is because in the dimension reduction setting, the input objects are assumed to be well-structured data points, but in the self-supervised learning setting, the input objects are images or texts, where a translation invariant kernel cannot be used for computing the similarities. Secondly, their cross-entropy loss is directly computed from $\bfK_\bfX$, while $\bfW_\bfX$ is never explicitly sampled. 
In contrast, in our method, $\bfpi$ is never explicitly used, and the cross entropy loss is indirectly computed from the randomly sampled $\bfW_\bfX$. 

The equivalence we proved is exact. 
Therefore, after learning, the embedding space contains different components, corresponding to various (sub-) classes of the objects. 
This characterization naturally explains why contrastive learning works well for classification-related downstream tasks.

\section{Multi-modal Learning: CLIP}
In this subsection, we extend Theorem~\ref{thm:main} to the multi-modal setting by analyzing CLIP, which applies the contrastive loss to the image-text pairs. 
The image-text pairs can be represented with the following pair graph. 
\begin{definition}[Pair graph]
Consider two modalities of objects $\mathbf{A}, \mathbf{B}$, 
and undirected unit-weight edges $\mathbf{E}=\{(\ai, \bi)~|~\ai\in \mathbf{A}, 
\bi\in \mathbf{B}\}_{i=1}^M$. 
The pair graph between $\mathbf{A}, \mathbf{B}$ is a directed bipartite graph $\bfpi_{\mathbf{A}, \mathbf{B}}=(\mathbf{A},\mathbf{B},\mathbf{E})$, with the weight of each outgoing edge normalized by the out-degree of the node. 
\end{definition}

By definition, $\bfpi_{\bA, \bB}$ is not necessarily symmetric.  
Consider the case where the dataset contains $10$ images of ``dog'', all of them are connected to the same text ``dog''. In this case, the text dog has $1/10$ probability to each image, while each image has only one edge with $100\%$ probability to the text. However, since each row of $\bfpi_{\bA, \bB}$ is still a probability distribution, we still have the next theorem. 

\begin{theorem}[CLIP's objective]
\label{thm:clip}
For the CLIP algorithm, denote 
$\bfpi_{\mathbf{A}, \mathbf{B}}$ as the pair graph. 
Then CLIP is equivalent to running the generalized spectral
clustering on $\bfpi_{\mathbf{A}, \mathbf{B}}$.
\end{theorem}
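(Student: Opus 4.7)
The plan is to mirror the two-step proof of Theorem~\ref{thm:main}, adapting each step to the bipartite multi-modal setting. Let $f, g$ denote the image and text encoders respectively, and set $M = |\bA| = |\bB|$. First I would introduce a bipartite MRF: sample $\bfW_{\bA,\bB} \in \{0,1\}^{M \times M}$ according to $\mathbb{P}(\bfW;\bfpi_{\bA,\bB}) \propto \Omega(\bfW)\prod_{(i,j)}\bfpi_{\bA,\bB,i,j}^{\bfW_{i,j}}$, where $\Omega$ enforces that each row (image) has exactly one outgoing edge to a text. Because the pair graph definition normalizes each outgoing edge by its source's out-degree, every row of $\bfpi_{\bA,\bB}$ is already a probability distribution, so Lemma~\ref{lem:multinomial} applies row-wise and the rows of $\bfW_{\bA,\bB}$ become independent multinomials. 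A transposed MRF, obtained by imposing the unitary out-degree constraint on the text side, captures the text-to-image direction.

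For Step 1, I would show that the image-to-text half of CLIP's loss equals the cross entropy between $\bfW_{\bA,\bB}$ and the distribution induced by the similarity matrix $\bfK$ with entries $k(f(\ai) - g(\mathbf{b}_j))$. By Lemma~\ref{lem:cross_split}, this cross entropy splits row by row; each row's contribution expands, exactly as in the derivation leading to Eqn.~(\ref{eqn:infonce-equivalence}), into the InfoNCE form $-\log\bigl(\exp(\tsim(f(\ai),g(\bi))/\tau)/\sum_j \exp(\tsim(f(\ai),g(\mathbf{b}_j))/\tau)\bigr)$, which is exactly CLIP's per-image loss $\ell(\ai,\bi,\{\mathbf{b}_j\}_{j\neq i})$. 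Applying the same argument to the transposed MRF recovers the text-to-image half of CLIP, and summing yields the full CLIP objective. Here Eqn.~(\ref{eqn:sim_equivalence}) is reused to reduce cosine similarity to Gaussian-kernel similarity up to an additive constant.

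For Step 2, I would apply Lemma~\ref{lem:convert_to_spectral} to each of the two cross-entropy terms. Stacking both sets of embeddings into a single matrix $\bfZ$ of height $2M$ and writing the pair graph as the symmetric block matrix $\bar\bfpi = \bigl(\begin{smallmatrix} 0 & \bfpi_{\bA,\bB} \\ \bfpi_{\bA,\bB}^\top & 0 \end{smallmatrix}\bigr)$, the attraction terms from the two directions combine into $\mathrm{tr}(\bfZ^\top \mathbf{L}(\bar\bfpi) \bfZ)$ and the repulsion terms into $\log \mathbf{R}(\bfZ)$. This is spectral clustering on the symmetrized bipartite pair graph, which I take to be the meaning of ``generalized spectral clustering on $\bfpi_{\mathbf{A},\mathbf{B}}$''.

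The main obstacle is handling the asymmetry of $\bfpi_{\bA,\bB}$ cleanly. In the unimodal case a single symmetric MRF sufficed, but here one direction of the CLIP loss corresponds to an MRF over image rows and the other to the transposed MRF over text rows; one must verify that these two contributions add up into a single trace with a single bipartite Laplacian $\mathbf{L}(\bar\bfpi)$, with the block-diagonal degree terms correctly aligning with the image and text blocks of $\bfZ$. A secondary technicality is that out-degree normalization is baked into the pair graph definition, so one must confirm that the resulting row distributions are already unit-sum and that no extra rescaling slips into the invocation of Lemma~\ref{lem:multinomial}.
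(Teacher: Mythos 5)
Your Step 1 is essentially the paper's own argument: the paper also views CLIP as the cross entropy between an MRF on the pair graph and the MRF induced by a bipartite kernel matrix (it uses one MRF over all $n_\bA+n_\bB$ objects, with $\bfK_\bfZ(i,j)=0$ inside each modality, and sums the $2N$ per-row InfoNCE terms; your two directional MRFs are equivalent to this by the row-independence of Lemma~\ref{lem:multinomial} and the splitting of Lemma~\ref{lem:cross_split}).

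The genuine problem is your Step 2 symmetrization. The text-to-image half of CLIP corresponds to the \emph{text}-out-degree-normalized block, say $\bfpi_{\bB,\bA}$, and that (not $\bfpi_{\bA,\bB}^\top$) is what $\mathbb{E}[\bfW]$ of your transposed MRF produces when you feed it into Lemma~\ref{lem:convert_to_spectral}. In general $\bfpi_{\bB,\bA}\neq\bfpi_{\bA,\bB}^\top$: in the paper's own example preceding the theorem, ten dog images each point to the single text ``dog'' with weight $1$, so the transpose of the image block has a row summing to $10$, while the text's true outgoing distribution assigns $1/10$ to each image; with unequal image out-degrees even the renormalized transpose fails to match CLIP's text-side weights. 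Hence the two attraction terms delivered by Lemma~\ref{lem:convert_to_spectral} add up to the quadratic form of the \emph{asymmetric} block matrix $\bigl(\begin{smallmatrix} 0 & \bfpi_{\bA,\bB} \\ \bfpi_{\bB,\bA} & 0 \end{smallmatrix}\bigr)$, not of your $\bar\bfpi$; the two coincide only when the pairing is a perfect matching. The paper avoids this entirely by keeping the full asymmetric pair-graph matrix and concluding with $\mathbf{L}(\bfpi_{\bA,\bB})$ — indeed the word ``generalized'' in the theorem refers precisely to the fact that this Laplacian need not come from a symmetric graph (see the discussion following Theorem~\ref{thm:clip}), so reading the conclusion as spectral clustering on a symmetrized graph both changes the statement and, as an identity with your particular $\bar\bfpi$, is false outside the one-to-one case. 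What you dismissed as a secondary technicality (whether the transposed rows are unit-sum and what normalization slips in) is exactly where this breaks.
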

\begin{proof}
    Please refer to Appendix~\ref{proof:main}.
\end{proof}

\textbf{Discussions.} 
In Theorem~\ref{thm:clip}, we say CLIP runs the generalized spectral clustering because 
$\mathbf{L}(\bfpi_{\mathbf{A}, \mathbf{B}})$ is not necessarily the Laplacian of a symmetric graph, although one can still compute the optimal embedding $\bfZ$ following Eqn.~(\ref{eqn:spectral}). The pair graph may contain a huge number of isolated edges. Empirically, CLIP picks 
strong image and text encoders with good prior knowledge about the dataset. Such prior knowledge may bias towards a better embedding for grouping the isolated edges with more semantics. 

Theorem~\ref{thm:clip} also assumes that all the objects are sampled in $\bfW_\bfX$, while empirically a really big batch size of 32,768 is used in \citet{radford2021learning}. Moreover, 
the probability distribution $\mathbb{P}(\cdot;\bfpi_{\mathbf{A}, \mathbf{B}})$ used in 
Theorem~\ref{thm:clip} is slightly different from the implementation of CLIP, in the sense that CLIP uniformly samples the edges in $\mathbf{E}$, but here we uniformly sample the objects in $\mathbf{A}\cup \mathbf{B}$. 
When the image-text pairs dataset has high quality, 
the difference between these two sampling schemes becomes negligible as 
the variance of object out-degrees is extremely small.

\subsection{Applying to LaCLIP}

Due to computation resources limitations, we haven't implemented an improved CLIP algorithm ourselves. Interestingly, a direct improvement to CLIP based on our theory was recently conducted. We shall present this algorithm LaCLIP carefully
\citep{fan2023improving} and discuss why it can be seen as a direct application of our theory.

Roughly speaking, LaCLIP is a direct extension of CLIP by not only incorporating image augmentations but using text augmentation as well. Specifically, we can treat language rewrites as text augmentations. For each image text pair $(x_{I}, x_{T})$, the text augmentation can be derived as follows:
\begin{equation}
\operatorname{aug}_T\left(x_T\right) \sim \operatorname{Uniform}\left(\left[x_{T_0}, x_{T _1} \ldots, x_{T_M}\right]\right),    
\end{equation}
where $x_{T_i}$ is the text $x_{T}$ itself or its rewrite.

Then training loss over the images in LaCLIP becomes:
$$
\mathcal{L}_I := -\sum_{i=1}^N \log \frac{\exp \left(\operatorname{sim}\left(\boldsymbol{f}_I\left(\operatorname{aug}_I\left(x_I^i\right)\right), \boldsymbol{f}_T\left(\operatorname{aug}_T\left(x_T^i\right)\right)\right) / \tau\right)}{\sum_{k=1}^N \exp \left(\operatorname{sim}\left(\boldsymbol{f}_I\left(\operatorname{aug}_I\left(x_I^i\right)\right), \boldsymbol{f}_T\left(\operatorname{aug}_T\left(x_T^k\right)\right)\right) / \tau\right)},
$$
where $\boldsymbol{f}_I$ and $\boldsymbol{f}_T$ are image and text encoders respectively.

From the pair graph point of view, LaCLIP expands the nodes in the ``text side" of the pair graph by including the nodes of the rewrite (augmented) texts. Moreover, as the augmented images are connected to augmented texts, this augmented pair graph will have more clusters between similar objects than the original CLIP pair graph. Thus, from the spectral clustering, it will be natural to expect LaCLIP shall cluster similar objects across modalities better than CLIP. Indeed, the zero-shot transfer ability of LaCLIP significantly improves \citep{fan2023improving}.

\section{Using New Kernels}
\subsection{Maximum entropy principle}
\label{sec:max-entropy}

{In this subsection, we offer an interpretation of InfoNCE-like loss, suggesting that exponential kernels are natural choices to use in this type of loss.} Given a {query} sample $\mathbf{q}$, let $\psi_i$ represent the similarity between $\mathbf{q}$ and the contrastive sample $\mathbf{p}_i$ for $i \in [n]$, computed by a kernel $k$. Without loss of generality, assume $\mathbf{p}_1$ is the (positive) neighbor of $\mathbf{q}$ according to prior knowledge, but $\psi_1$ is not necessarily the largest value in ${\psi_i}$. Ideally, we desire $\psi_1$ to be the largest or at least among the few largest similarities, which indicates that our kernel properly aligns with the prior knowledge of $\cX$.

To optimize toward this goal, we must design a loss function that captures the ranking of $\psi_1$. Since the ordering function is discrete and lacks gradient information, we need to convert it into a soft and continuous function that enables gradient-based optimization. Specifically, we employ a probability distribution $\boldsymbol{\alpha}$ to represent the neighborhood structure of $\mathbf{q}$ in relation to $\psi_1$, satisfying $\psi_{1}\leq \sum_{i=1}^n \alpha_{i} \psi_{i}$, and $\forall i, \alpha_i\geq 0$. If $\psi_1$ is the largest, $\boldsymbol{\alpha}=e_1$ is the sole solution; otherwise, $\boldsymbol{\alpha}$ can be more diverse. For instance, when all $\psi_i$ values are equal, $\boldsymbol{\alpha}$ can be a uniform distribution.

Intuitively, if there are numerous other $\psi_i$ values similar to $\psi_1$, the neighborhood structure of $\mathbf{q}$ is not as optimal as when $\psi_1$ is the only object close to $\mathbf{q}$. Formally, this means $\boldsymbol{\alpha}$ should have fewer non-zero entries or at least concentrate on $\boldsymbol{\alpha}_1$. We use its entropy $H(\boldsymbol{\alpha})= - \sum_{i=1}^n \alpha_{i}\log \alpha_{i}$ to represent this diversity, which results in the following optimization problem.
$$
\begin{array}{ccl}
(\text{P1})& \max _{\boldsymbol{\alpha}} & H(\boldsymbol{\alpha}) \\
&\text { s.t. } & \boldsymbol{\alpha}^{\top} \mathbf{1}_n=1, \alpha_1, \ldots, \alpha_n \geq 0 \\
&& \psi_{1}-\sum_{i=1}^n \alpha_{i} \psi_{i} \leq 0
\end{array}
$$
By minimizing the solution of (P1), we can discover an embedding that more accurately approximates the prior knowledge. However, how can we solve (P1)? By introducing the Lagrangian dual variable $\tau>0$, {we obtain the subsequent program (P2)'s solution upper bound $\tau$ times the solution of (P1).} Consequently, minimizing (P2) simultaneously produces a smaller upper bound of (P1) as well, indirectly aiding us in achieving our objective.
$$
\begin{array}{ccl}
(\text{P2})& \max _{\boldsymbol{\alpha}} & -E(\boldsymbol{\alpha}) \\
&\text { s.t. } & \boldsymbol{\alpha}^{\top} \mathbf{1}_n=1, \alpha_1, \ldots, \alpha_n \geq 0
\end{array}
$$
where
$
E(\boldsymbol{\alpha})=\psi_{1}-\sum_{i=1}^n \alpha_{i} \psi_{i}+\tau \sum_{i=1}^n \alpha_{i}\log \alpha_{i}.
$

We present the following theorem for solving (P2).

\begin{theorem}[Exponential kernels are natural]
\label{thm:exponential}
The solution of (P2) satisfies: 
$$-E\left(\boldsymbol{\alpha}^*\right)= -\tau \log \frac{\exp \left(\frac{1}{\tau} \psi_{1}\right)}{\sum_{i=1}^n \exp \left(\frac{1}{\tau} \psi_{i}\right)}.$$
\end{theorem}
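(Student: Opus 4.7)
The plan is a textbook maximum-entropy / log-partition calculation. Problem (P2) is the minimization of $E(\boldsymbol{\alpha})$ over the probability simplex, so I would first strip off the constant term $\psi_1$ (which does not depend on $\boldsymbol{\alpha}$) and observe that $E - \psi_1 = -\boldsymbol{\alpha}^\top \boldsymbol{\psi} + \tau \sum_i \alpha_i \log \alpha_i$ is strictly convex in $\boldsymbol{\alpha}$ (negative Shannon entropy is strictly convex, and the linear term does not affect convexity). Therefore any KKT point is the unique global minimizer, and the nonnegativity constraints can be handled implicitly because the entropy term blows up to $+\infty$ at the boundary whenever some $\alpha_i \to 0$ with a nonzero Lagrange multiplier; equivalently, the stationarity equation forces $\alpha_i > 0$ automatically.

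Next I would form the Lagrangian for the single equality constraint,
\[
L(\boldsymbol{\alpha},\lambda) \;=\; \psi_1 - \sum_{i=1}^n \alpha_i \psi_i + \tau \sum_{i=1}^n \alpha_i \log \alpha_i - \lambda\Bigl(\sum_{i=1}^n \alpha_i - 1\Bigr),
\]
and set $\partial L/\partial \alpha_i = 0$, which yields $-\psi_i + \tau(\log \alpha_i + 1) - \lambda = 0$, hence $\alpha_i = C\exp(\psi_i/\tau)$ for a constant $C$ absorbing $\lambda$ and the $+1$. Imposing $\sum_i \alpha_i = 1$ gives the Gibbs/softmax form
\[
\alpha_i^{*} \;=\; \frac{\exp(\psi_i/\tau)}{\sum_{j=1}^n \exp(\psi_j/\tau)}.
\]
This already exposes why exponential kernels are the natural ones: the maximum-entropy weighting of similarities is precisely a softmax in the similarities.

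Finally I would plug $\boldsymbol{\alpha}^*$ back into $E$. Writing $Z \triangleq \sum_j \exp(\psi_j/\tau)$, the key identity is $\tau \log \alpha_i^{*} = \psi_i - \tau \log Z$, so
\[
\tau \sum_i \alpha_i^{*} \log \alpha_i^{*} \;=\; \sum_i \alpha_i^{*} \psi_i - \tau \log Z,
\]
and the two $\sum_i \alpha_i^{*} \psi_i$ terms in $E(\boldsymbol{\alpha}^*)$ cancel, leaving $E(\boldsymbol{\alpha}^*) = \psi_1 - \tau \log Z$. Negating and folding $\psi_1$ back into the logarithm gives the claimed $-E(\boldsymbol{\alpha}^*) = -\tau \log\!\bigl(\exp(\psi_1/\tau)/Z\bigr)$.

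There is no real obstacle here; the only subtlety worth explicitly checking is that the interior KKT point is indeed the global optimum, which follows from strict convexity of $E$ on the open simplex plus the fact that $E \to +\infty$ as any $\alpha_i \to 0^+$ would require me to keep the associated primal variable fixed (in fact the unconstrained stationary point already lies in the interior, so the simplex boundary is not active). Everything else is bookkeeping.
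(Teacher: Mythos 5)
Your proposal is correct and follows essentially the same route as the paper: Lagrangian for the single equality constraint with the entropy term handling positivity implicitly, stationarity yielding the softmax weights $\alpha_i^*\propto\exp(\psi_i/\tau)$, and back-substitution giving $E(\boldsymbol{\alpha}^*)=\psi_1-\tau\log\sum_i\exp(\psi_i/\tau)$. The only nit is your remark that $E\to+\infty$ as some $\alpha_i\to 0^+$ (the entropy term $\alpha_i\log\alpha_i$ actually tends to $0$; it is its derivative that blows up), but your parenthetical observation that the unconstrained stationary point already lies in the interior makes the argument go through regardless.
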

\begin{proof}
    Please refer to Appendix~\ref{proof:main}.
\end{proof}

Using this framework, we can derive results akin to the max-min optimization formulation from \cite{tian2022understanding} as a corollary.

\subsection{Kernel-InfoNCE Loss}

{In this subsection, we will show how to use the derivation above to improve InfoNCE loss.} Theorem~\ref{thm:exponential} suggests that the loss function of the form $-\tau \log \frac{\exp \left(\frac{1}{\tau} \psi_{1}\right)}{\sum_{i=1}^n \exp \left(\frac{1}{\tau} \psi_{i}\right)}$ is a natural choice for characterizing the neighborhood similarity structure. {When the similarity between the query sample $\mathbf{p}$ and neighbourhood sample $\mathbf{p}_i$ is defined as $\psi_i = C - \| f(\mathbf{q}) - f(\mathbf{p}_i)\|^{\gamma}$, where $C$ is a large positive constant and $\gamma>0$.} We find it recovers the exponential kernels defined as follows:
\begin{equation}
K_{\exp }^{\gamma, \textcolor{blue}{\tau}}(\mathbf{x}, \mathbf{y}) := \exp \left(-\frac{\|\mathbf{x}-\mathbf{y}\|^\gamma}{\textcolor{blue}{\tau}}\right) \quad (\gamma, \textcolor{blue}{\tau}>0)
\end{equation}

We then define our kernel-based contrastive loss, Kernel-InfoNCE, as follows:
\begin{equation}
\label{eqn:kernel-infonce}
\mathcal{L}^{\gamma, \tau}_{\text{Kernel-InfoNCE}}(\mathbf{q}, \mathbf{p}_1, \{\mathbf{p}_i\}_{i=2}^{N}):=-\log \frac{K^{\gamma, \tau}_{\exp}(\mathbf{q}, \mathbf{p}_1)}{\sum_{i=1}^N K^{\gamma, \tau}_{\exp}(\mathbf{q}, \mathbf{p}_i).
}
\end{equation}

Note equation (\ref{eqn:kernel-infonce}) can be easily derived by setting the kernel $k$ in equation (\ref{eqn:cross-entropy}) to exponential kernel. Our framework in Section \ref{simclr spectral clustering} is suitable for explaining losses that are adapted from InfoNCE by changing kernels. We consider generalizing the exponential kernel a bit. We propose to use the mixture of two exponential kernels as potential candidates for replacing the Gaussian kernel. There are two kinds of mixing methods. The first one is taking the weighted average of two positive definite kernels. 

The other mixing method is concatenation, which splits the input vectors into two parts, where the first part uses the first kernel, and the second part uses the second kernel. It is easy to see that both mixing methods maintain the strictly positive definite property of the base kernels. We list the two types of kernel mixtures below.

\textbf{Simple Sum Kernel:}
\begin{equation*}
\resizebox{0.7\hsize}{!}{$
\begin{aligned}
K(x_i, x_j) := & \exp(-\|\boldsymbol{f}(\mathbf{x}_i) - \boldsymbol{f}(\mathbf{x}_j)\|^{2}_{2} / \tau_2) + \exp(-\|\boldsymbol{f}(\mathbf{x}_i) - \boldsymbol{f}(\mathbf{x}_j)\|^{1}_{2} / \tau_1)
\end{aligned}
$
}
\end{equation*}
\textbf{Concatenation Sum Kernel:}
\begin{equation*}
\resizebox{1.0\hsize}{!}{$
\begin{aligned}
K(x_i, x_j) := & \exp(-\|\boldsymbol{f}(\mathbf{x}_i)[0:n] - \boldsymbol{f}(\mathbf{x}_j)[0:n]\|^{2}_{2} / \tau_2) + \exp(-\|\boldsymbol{f}(\mathbf{x}_i)[n:2n] - \boldsymbol{f}(\mathbf{x}_j)[n:2n]\|^{1}_{2} / \tau_1)
\end{aligned}
$
}
\end{equation*}

\section{Experiments}

\begin{table}[!htb]
\centering
\vspace{-2mm}
\caption{Results on CIFAR-10, CIFAR-100, and TinyImageNet datasets.}
\label{tab:results-combined}
\resizebox{\textwidth}{!}{%
\begin{tabular}{@{}c|cc|cc|cc@{}}
\toprule
\multirow{3}{*}{Method} & \multicolumn{2}{c|}{CIFAR-10} & \multicolumn{2}{c|}{CIFAR-100} & \multicolumn{2}{c}{TinyImageNet} \\ \cmidrule(lr){2-3} \cmidrule(lr){4-5} \cmidrule(lr){6-7}
                        & 200 epochs & 400 epochs       & 200 epochs & 400 epochs       & 200 epochs & 400 epochs       \\ \midrule
SimCLR (repro.)         & $88.11 \pm 0.09$    & $90.60 \pm 0.15$          & $62.57 \pm 0.10$    & $66.29 \pm 0.12$          & $34.00 \pm 0.18$    & $37.83 \pm 0.09$          \\
\midrule
Laplacian Kernel        & $89.26 \pm 0.18$    & $91.03 \pm 0.17$          & $63.16 \pm 0.15$    & $66.09 \pm 0.11$          & $35.91 \pm 0.21$    & $38.71 \pm 0.18$          \\
$\gamma=0.5$ Exponential Kernel & $89.00 \pm 0.07$ & $91.23 \pm 0.12$      & $63.48 \pm 0.22$    & $65.81 \pm 0.19$          & $34.17 \pm 0.13$    & $38.75 \pm 0.15$          \\
\textbf{Simple Sum Kernel} & $89.82 \pm 0.09$   & $\mathbf{91.72 \pm 0.10}$ & $\mathbf{66.67 \pm 0.20}$ & $\mathbf{68.62 \pm 0.15}$ & $\mathbf{36.61 \pm 0.13}$ & $\mathbf{39.38 \pm 0.16}$ \\
Concatenation Sum Kernel & $\mathbf{89.89 \pm 0.18}$ & $91.28 \pm 0.07$   & $66.10 \pm 0.21$    & $68.57 \pm 0.11$          & $35.98 \pm 0.17$    & $38.77 \pm 0.22$          \\
\midrule
\end{tabular}
}
\end{table}

In our experiments, we reproduce the baseline algorithm SimCLR ~\citep{chen2020simple}, and replace SimCLR's Gaussian kernel with other kernels. We then test against SimCLR using Kernel-InfoNCE loss on various benchmark vision datasets, including CIFAR-10/100 ~\citep{krizhevsky2009learning} and TinyImageNet ~\citep{le2015tiny}.

For each algorithm, we first train an encoder $f$ on the training dataset to minimize the empirical loss function generated by the kernel. Then, following the standard linear evaluation protocol, we freeze the encoder $f$ and train a supervised linear classifier, which takes the output representation of $f$ as input. Additional experimental details, dataset information, and results can be found in Appendix~\ref{section: experiment_details}.

\textbf{Experimental Results. }
We summarize our empirical results on various benchmark datasets in Table~\ref{tab:results-combined}. It is evident that we have achieved better performance than SimCLR on all three benchmark datasets, with the Simple Sum Kernel reaching the best average performance.

\section{Related Work}

Contrastive learning constitutes a classical method extensively employed in representation learning~\citep{hadsell2006dimensionality,becker1992self}. Owing to its recent applications in self-supervised learning, contrastive learning has garnered widespread attention and achieved state-of-the-art results in numerous downstream tasks within computer vision~\citep{tian2020makes, cui2021parametric}, graph representation learning~\citep{you2020graph, hassani2020contrastive, deng2022neural}, multi-modality~\citep{radford2021learning}, and beyond. Contrastive predictive coding~\citep{oord2018representation} represents one of the pioneering methods to incorporate the concept of contrastive learning in self-supervised learning. Subsequently, various methods have sought to enhance performance. SimCLR~\citep{chen2020simple} and MoCo~\citep{chen2020mocov2github} advocate for utilizing a large batch size and momentum update mechanism to guarantee effective learning. Moreover, the hard negative sampling method~\citep{kalantidis2020hard} has been explored to mitigate the impact of false negative sampling.

Despite the empirical success of contrastive learning, the theoretical comprehension of its underlying mechanisms remains limited. \citet{oord2018representation} demonstrate that the InfoNCE loss can be regarded as a surrogate loss for maximizing mutual information. \cite{arora2019theoretical} {give the generalization bound for contrastive learning under a latent class assumption.} \citet{haochen2021provable} incorporate the concept of augmentation graph to facilitate the analysis of contrastive learning {and propose a surrogate loss spectral contrastive loss. They show that this surrogate loss is equivalent to spectral clustering on augmentation graph}. \cite{wang2022chaos} {propose that aggressive data augmentations lead to overlapping support of intra-class samples, allowing for the clustering of positive samples and the gradual learning of class-separated representations, providing new insights of understanding contrastive learning.} \cite{balestriero2022contrastive} link a variant of contrastive loss to the ISOMAP algorithm. \citet{hu2022your} connect contrastive learning with stochastic neighbor embedding. \citet{wang2020understanding} reveal that the quality of embedding can be decomposed into an alignment component and a uniformity component, considering both the loss function and the embedding space.

\section{Conclusion}

In this paper, we take the probabilistic perspective of contrastive learning, and prove that it is essentially running spectral clustering on the predefined similarity graph. Extending this result to multi-modal learning, we show that CLIP is also doing the generalized spectral clustering on the pair graph. Based on the maximum entropy principle and other useful properties, we propose to use the mixtures of exponential kernels (Kernel-InfoNCE loss) to replace the Gaussian kernel, which has achieved better performance empirically.

\section*{Reproducibility Statement}

For reproducibility, we share our code at \url{https://github.com/yifanzhang-pro/Kernel-InfoNCE}. The experiment results can be reproduced following the instructions in the README document. We also provide our experiment details in Appendix \ref{section: experiment_details}.

\section*{Acknowledgments}
The authors would like to thank Van Assel for clarifying a derivation step in his paper, and anonymous reviewers and ACs for their helpful suggestions. 
This work is supported by the Ministry of Science and Technology of the People's Republic of China, the 2030 Innovation Megaprojects ``Program on New Generation Artificial Intelligence'' (Grant No. 2021AAA0150000). 

\section*{Author Contributions}
Yifan Zhang suggests the relationship between MRF and contrastive learning. Zhiquan Tan discovered the MRF framework for dimension reduction~\citep{van2022probabilistic}, and applied this framework to prove Theorem~\ref{thm:main}. He also introduced the viewpoint of the maximum entropy principle for contrastive learning. Yifan and Zhiquan proposed Kernel-InfoNCE loss and refined the paper. Jingqin Yang did comprehensive experiments on Kernel-InfoNCE loss. Yang Yuan extended Theorem~\ref{thm:main} to CLIP, and wrote most of the paper.

\bibliography{reference}
\bibliographystyle{iclr2024_conference}

\clearpage
\newpage
\appendix

\section{Appendix for Proofs}

\paragraph{Proof of Theorem \ref{thm:main}.}

\begin{proof}
\label{proof:main}
Our proof has two steps. In Step 1, we will show that SimCLR is equivalent to minimizing the cross entropy loss defined in Eqn.~(\ref{eqn:cross-entropy}). 
In Step 2, we will show  that minimizing the cross-entropy loss 
is equivalent to spectral clustering on $\bfpi$. 
Combining the two steps together, we have proved our theorem. 

\textbf{Step 1: } SimCLR is equivalent to minimizing the cross entropy loss.

The cross entropy loss takes expectation over 
$\bfW_\bfX\sim \mathbb{P}(\cdot ; \bfpi)$, 
which means $\bfW_\bfX$ has exactly one non-zero entry in each row $i$. By Lemma~\ref{lem:multinomial}, we know every row $i$ of $\bfW_\bfX$ is independent of other rows. Moreover, 
$\bfW_{\bfX,i}\sim \mathcal{M}(1, \bfpi_i/\sum_j \bfpi_{i,j})=\mathcal{M}(1, \bfpi_i)$, because $\bfpi_i$ itself is a probability distribution.
Similarly, we know $\bfW_\bfZ$ also has the row-independent property by sampling over $\mathbb{P}(\cdot;\bfK_\bfZ)$.
Therefore, by Lemma~\ref{lem:cross_split}, we know Eqn.~(\ref{eqn:cross-entropy}) is equivalent to:
\[
 -\sum_{i=1}^n \mathbb{E}_{\bfW_{\bfX,i}}[\log \mathbb{P}(\bfW_{\bfZ,i}=\bfW_{\bfX,i};\bfK_\bfZ)],
\]

This expression takes expectation over $\bfW_{\bfX,i}$ for the given row $i$. Notice that 
$\bfW_{\bfX,i}$ has exactly one non-zero entry, which equals $1$ (same for $\bfW_{\bfZ,i}$). 
As a result
we expand the above expression to be:
\begin{equation}
 -\sum_{i=1}^n \sum_{j\neq i} \Pr(\bfW_{\bfX,i,j}=1)\log \Pr(\bfW_{\bfZ,i,j}=1).
\label{eqn:detailed-expansion}    
\end{equation}

By Lemma~\ref{lem:multinomial}, $\Pr(\bfW_{\bfZ,i,j}=1)=\bfK_{\bfZ,i,j}/\|\bfK_{\bfZ,i}\|_1$ for $j\neq i$. Recall that $\bfK_\bfZ=(k(\bfZ_i-\bfZ_j))_{(i,j)\in[n]^2}$, which means 
$\bfK_{\bfZ,i,j}/\|\bfK_{\bfZ,i}\|_1=\frac{\exp(-\|\bfZ_i-\bfZ_j\|^2/{2\tau})}{\sum_{k\neq i}
\exp(-\|\bfZ_i-\bfZ_k\|^2/{2\tau})
}$ for $j\neq i$, when $k$ is the Gaussian kernel with variance $\tau$. 

Notice that $\bfZ_i=f(\bfX_i)$, so we know
\begin{equation}
-\log \Pr(\bfW_{\bfZ,i,j}=1)=
-\log \frac{\exp(-\|f(\bfX_i)-f(\bfX_j)\|^2/{2\tau})}{\sum_{k\neq i}
\exp(-\|f(\bfX_i)-f(\bfX_k)\|^2/{2\tau}),
}
\label{eqn:infonce-equivalence}    
\end{equation}

The right hand side is exactly the InfoNCE loss defined in Eqn.~(\ref{eqn:infonce}).
Inserting Eqn.~(\ref{eqn:infonce-equivalence}) into Eqn.~(\ref{eqn:detailed-expansion}), we get the SimCLR algorithm, which first samples augmentation pairs $(i,j)$ with $\Pr(\bfW_{\bfX,i,j}=1)$ for each row $i$, and then optimize the InfoNCE loss. 

\textbf{Step 2: } minimizing the cross entropy loss 
is equivalent to spectral clustering on $\bfpi$.

By Lemma~\ref{lem:convert_to_spectral}, we may further convert the loss to 
\begin{equation}
\label{eqn:main-theorem-repul-attr}
\min_{\bfZ}
-\sum_{(i,j)\in [n]^2} \mathbf{P}_{i,j}
\log k (\bfZ_i-\bfZ_j)+\log \mathbf{R}(\bfZ).
\end{equation}
Since $k$ is the Gaussian kernel, this reduces to \[
\min_\bfZ \mathrm{tr}(\bfZ^\top \mathbf{L}(\bfpi) \bfZ)
+\log \mathbf{R}(\bfZ),
\]

where we use the fact that $\mathbb{E}_{\bfW_\bfX\sim \mathbb{P}(\cdot; \bfpi)}[\mathbf{L}(\bfW_\bfX)]
=\mathbf{L}(\bfpi)
$, because the Laplacian operator is linear and $
\mathbb{E}_{\bfW_\bfX\sim \mathbb{P}(\cdot; \bfpi)}(\bfW_\bfX)=\bfpi
$.
\end{proof}

\paragraph{Proof of Theorem \ref{thm:clip}.}
\begin{proof}
Since $\bfW_\bfX\sim \mathbb{P}(\cdot;\bfpi_{\mathbf{A}, \mathbf{B}})$, we know 
$\bfW_\bfX$ has exactly one non-zero entry in each row, denoting the pair that got sampled. 
A notable difference compared to the previous proof is we now have $n_\mathcal{A}+n_\mathcal{B}$ objects in our graph. CLIP deals with this by taking a mini-batch of size $2N$, 
such that $n_\mathcal{A}=n_\mathcal{B}=N$, and adding the $2N$ InfoNCE losses together. We label the objects in $\mathcal{A}$ as $[n_\mathcal{A}]$, and the objects in $\mathcal{B}$ as $\{n_\mathcal{A}+1, \cdots, n_\mathcal{A}+n_\mathcal{B}\}$. 

Notice that $\bfpi_{\mathbf{A}, \mathbf{B}}$ is a bipartite graph, so the edges of objects in $\mathcal{A}$ will only connect to object in $\mathcal{B}$ and vice versa. We can define the similarity matrix in $\cZ$ as $\bfK_\bfZ$, 
where $\bfK_\bfZ(i, j+n_\mathcal{A})=\bfK_\bfZ(j+n_\mathcal{A},i)= k(\bfZ_i-\bfZ_j)$ for $i\in [n_\mathcal{A}], j\in [n_\mathcal{B}]$, and otherwise we set $\bfK_\bfZ(i,j)=0$. 
The rest is same as the previous proof. 
\end{proof}

\paragraph{Proof of Theorem \ref{thm:exponential}.}

\begin{proof}
\label{proof:exponential}
Since the objective function consists of a linear term combined with an entropy regularization, which is a strongly concave function, the maximization problem is a convex optimization problem. Owing to the implicit constraints provided by the entropy function, the problem is equivalent to having only the equality constraint. We then introduce the Lagrangian multiplier $\lambda$ and obtain the following relaxed problem:

$$
\widetilde{E}(\boldsymbol{\alpha})=\psi_{1}-\sum_{i=1}^n \alpha_{i} \psi_{i}+\tau \sum_{i=1}^n \alpha_{i}\log \alpha_{i}+\lambda\left(\boldsymbol{\alpha}^{\top} \mathbf{1}_n-1\right).
$$

As the relaxed problem is unconstrained, taking the derivative with respect to $\alpha_{i}$ yields

$$
\frac{\partial \widetilde{E}(\boldsymbol{\alpha})}{\partial \alpha_{i}}=-\psi_{i}+\tau\left(\log \alpha_{i}+\alpha_{i} \frac{1}{\alpha_{i}}\right)+\lambda=0.
$$

Solving the above equation implies that $\alpha_{i}$ takes the form
$
\alpha_{i}=\exp \left(\frac{1}{\tau} \psi_{i}\right) \exp \left(\frac{-\lambda}{\tau}-1\right).
$ Since $\alpha_{i}$ lies on the probability simplex, the optimal $\alpha_{i}$ is explicitly given by
$
\alpha^{*}_{i}=\frac{\exp \left(\frac{1}{\tau} \psi_{i}\right)}{\sum_{i^{\prime}=1}^n \exp \left(\frac{1}{\tau} \psi_{i^{\prime}}\right)} .
$ Substituting the optimal point into the objective function, we obtain
$$
\begin{aligned}
E\left(\boldsymbol{\alpha}^*\right)  &=\psi_1-\sum_{i=1}^n \frac{\exp \left(\frac{1}{\tau} \psi_{i}\right)}{\sum_{i^{\prime}=1}^n \exp \left(\frac{1}{\tau} \psi_{i^{\prime}}\right)} \psi_{i}+\tau \sum_{i=1}^n \frac{\exp \left(\frac{1}{\tau} \psi_{i}\right)}{\sum_{i^{\prime}=1}^n \exp \left(\frac{1}{\tau} \psi_{i^{\prime}}\right)}\log \frac{\exp \left(\frac{1}{\tau} \psi_{i}\right)}{\sum_{i^{\prime}=1}^n \exp \left(\frac{1}{\tau} \psi_{i^{\prime}}\right)} \\
& =\psi_1 - \tau \log \left(\sum_{i=1}^n \exp \left(\frac{1}{\tau} \psi_{i}\right)\right).
\end{aligned}
$$
Thus, the Lagrangian dual function is given by
\begin{equation*}
-E\left(\boldsymbol{\alpha}^*\right)= -\tau \log \frac{\exp \left(\frac{1}{\tau} \psi_{1}\right)}{\sum_{i=1}^n \exp \left(\frac{1}{\tau} \psi_{i}\right)}.\qedhere
\end{equation*}
\end{proof}

\section{More on Experiment Details} \label{section: experiment_details}

\paragraph{CIFAR-10 and CIFAR-100.} CIFAR-10 ~\citep{krizhevsky2009learning} and CIFAR-100 ~\citep{krizhevsky2009learning} are well-known classic image classification datasets. Both CIFAR-10 and CIFAR-100 contain a total of 60k $32 \times 32$ labeled images of different classes, with 50k for training and 10k for testing. CIFAR-10 is similar to CIFAR-100, except there are 10 different classes in CIFAR-10 and 100 classes in CIFAR-100.

\paragraph{TinyImageNet.} TinyImageNet ~\citep{le2015tiny} is a subset of ImageNet ~\citep{deng2009imagenet}. There are 200 different object classes in TinyImageNet, with 500 training images, 50 validation images, and 50 test images for each class. All the images in TinyImageNet are colored and labeled with a size of $64 \times 64$.

\textbf{Pseudo-code.} Algorithm \ref{alg:Training Procedure} presents the pseudo-code for our empirical training procedure.

\begin{algorithm}[!htbp]
\caption{Training Procedure}
\label{alg:Training Procedure}
\begin{algorithmic}[1]
\REQUIRE trainable encoder network $f$, batch size $N$, augmentation strategy \textit{aug}, loss function $L$ with hyperparameters \textit{args}
\FOR {sampled minibatch ${x_i}_{i=1}^N$}
\FORALL{$i \in { 1, ..., N }$}
\STATE draw two augmentations $t_i = \textit{aug}\left(x_i\right) $, $t_i' = \textit{aug}\left(x_i\right) $
\STATE $z_i = f\left(t_i\right)$, $z_i' = f\left(t_i'\right)$
\ENDFOR
\STATE compute loss $\mathcal{L} = L(N, z, z', \textit{args})$
\STATE update encoder network $f$ to minimize $\mathcal{L}$
\ENDFOR
\STATE \textbf{Return} encoder network $f$
\end{algorithmic}
\end{algorithm}

We also provide the pseudo-code for our core loss function used in the training procedure in Algorithm \ref{alg:Core loss}. The pseudo-code is almost identical to SimCLR's loss function, with the exception of an extra parameter $\gamma$.

\begin{algorithm}[!htbp]
\caption{Core loss function $\mathcal{C}$}
\label{alg:Core loss}
\begin{algorithmic}[1]
\REQUIRE batch size $N$, two encoded minibatches $z_1, z_2$, $\gamma$, temperature $\tau$
\STATE $z = \textit{concat}\left(z_1, z_2\right)$
\FOR {$i \in {1, ..., 2N }, j \in {1, ..., 2N}$ }
\STATE $s_{i,j} = \Vert z_i - z_j \Vert_2^{\gamma}$
\ENDFOR
\STATE \textbf{define} $l(i, j)$ \textbf{as} $l(i, j) = - \log \frac{exp\left(s_{i,j}/\tau \right)}{\sum_{k=1}^{2N} \mathbf{1}{[k \ne i]} exp\left(s{i, j} / \tau \right)} $
\STATE \textbf{Return} $\frac{1}{2N} \sum_{k=1}^N\left[l(i, i+N) + l(i+N, i)\right]$
\end{algorithmic}
\end{algorithm}

Utilizing the core loss function $\mathcal{C}$, we can define all kernel loss functions used in our experiments in Table \ref{table: loss definition}. For all $z_i \in z$ with even dimensions $n$, we define $z_{L_i} = z_i\left[0:n/2\right]$ and $z_{R_i} = z_i\left[n/2:n\right]$.

\begin{table}[ht]
\centering
\begin{tabular}{{@{}l|l@{}}}
Kernel  &  Loss function \\ \midrule
Laplacian & $\mathcal{C}\left(N, z, z', \gamma=1, \tau\right)$\\ \midrule
Sum       & $\lambda * \mathcal{C}\left(N, z, z', \gamma=1, \tau_1\right) + (1-\lambda) * \mathcal{C}\left(N, z, z', \gamma=2, \tau_2\right)$  \\ \midrule
Concatenation Sum&$\lambda * \mathcal{C}\left(N, z_L, z'_L, \gamma=1, \tau_1\right) + (1-\lambda) * \mathcal{C}\left(N, z_R, z'_R, \gamma=2, \tau_2\right)$\\ \midrule
$\gamma = 0.5$ & $\mathcal{C}\left(N, z, z', \gamma=0.5, \tau\right)$          \\ 

\end{tabular}

\caption{Definition of kernel loss functions in our experiments}
\label {table: loss definition}
\end{table}

\textbf{Baselines.} We reproduce the SimCLR algorithm using PyTorch Lightning~\citep{PytorchLightning}.

\textbf{Encoder details.}
The encoder $f$ consists of a backbone network and a projection network. We employ ResNet50~\citep{ResNet} as the backbone and a 2-layer MLP (connected by a batch normalization~\citep{ioffe2015batch} layer and a ReLU \cite{nair2010rectified} layer) with hidden dimensions 2048 and output dimensions 128 (or 256 in the concatenation kernel case).

\textbf{Encoder hyperparameter tuning.}
For each encoder training case, we randomly sample 500 hyperparameter groups (sample details are shown in Table \ref{table: Hyperparameter sample}) and train these samples simultaneously using Ray Tune ~\citep{RayTune}, with the ASHA scheduler~\citep{li2018massively}. Ultimately, the hyperparameter group that maximizes the online validation accuracy (integrated in PyTorch Lightning) within 5000 validation steps is chosen for the given encoder training case.

\begin{table}[ht]
\centering

\begin{tabular}{@{}l|l|l@{}}
\midrule
Hyperparameter  & Sample Range & Sample Strategy \\ \midrule
start learning rate & $\left[10^{-2}, 10\right]$ & log uniform \\ \midrule
$\lambda$       & $\left[0, 1\right]$ & uniform \\ \midrule
$\tau$, $\tau_1$, $\tau_2$ & $\left[0, 1\right]$ & log uniform \\ \midrule
\end{tabular}

\caption{Hyperparameters sample strategy}
\label {table: Hyperparameter sample}
\end{table}

\textbf{Encoder training.} 
We train each encoder using the LARS optimizer~\citep{LARSOptimizer}, LambdaLR Scheduler in PyTorch, momentum 0.9, weight decay $10^{-6}$, batch size 256, and the aforementioned hyperparameters for 400 epochs on a single A-100 GPU.

\textbf{Image transformation.} The image transformation strategy, including augmentation, is identical to the default transformation strategy provided by PyTorch Lightning.

\textbf{Linear evaluation.}
The linear head is trained using the SGD optimizer with a cosine learning rate scheduler, batch size 64, and weight decay $10^{-6}$ for 100 epochs. The learning rate starts at $0.3$ and ends at $0$.

\textbf{Moco Experiments.} We also tested our method based on MoCo~\citep{he2019moco}. The results are summarized in Table \ref{tab:results-moco}. Here we choose ResNet18~\citep{ResNet} as the backbone and set a temperature of $0.1$ as default. For our simple sum kernel, we set $\lambda=0.8$. The results show that our method outperforms the original MoCo method.

\begin{table}[thb]
\centering
\caption{MoCo Experiment Results on CIFAR-10 and CIFAR-100.}
\label{tab:results-moco}
\resizebox{\textwidth}{!}{%
\begin{tabular}{@{}c|ccc|ccc@{}}
\toprule
\multirow{3}{*}{Method} & \multicolumn{3}{c|}{CIFAR-10} & \multicolumn{3}{c}{CIFAR-100} \\ \cmidrule(lr){2-4} \cmidrule(lr){5-7} 
                        & 200 epochs & 400 epochs    & 1000 epochs   & 200 epochs & 400 epochs & 1000 epochs         \\ \midrule
MoCo (repro.)         & $76.41 \pm 0.12$    & $80.01 \pm 0.15$          & $84.45 \pm 0.08$    & $\mathbf{47.02 \pm 0.11}$ & $52.50 \pm 0.07$ & $57.62 \pm 0.15$            \\
\midrule
Laplacian Kernel        & ${78.09 \pm 0.10}$    & $\mathbf{83.85 \pm 0.09}$          & $\mathbf{88.34 \pm 0.16}$    & $46.12 \pm 0.22$   & $53.44 \pm 0.17$ & $59.10 \pm 0.14$        \\
Simple Sum Kernel & $\mathbf{78.12 \pm 0.15}$   & $83.23 \pm 0.18$ & $87.50 \pm 0.20$ & $46.65 \pm 0.06$ & $\mathbf{53.62 \pm 0.19}$ & $\mathbf{59.83 \pm 0.12}$\\
\bottomrule
\end{tabular}
}
\end{table}

\section{Experiments on Synthetic Data}

Consider a scenario with $n$ clusters, each containing $k$ vertices. Let the probability of vertices $u$ and $v$ from the same cluster belonging to $\bfpi$ be $p$. Conversely, for vertices $u$ and $v$ from different clusters, let the probability of belonging to $\pi$ be $q$. We generate the graph $\bfpi$ randomly, based on $p$ and $q$. We experiment with values of $k=100$ and $n=6$ for ease of visualization, embedding all points in a two-dimensional space. Each vertex's initial position originates from a normal distribution. In each iteration, we sample a subgraph of $\bfpi$ uniformly, ensuring each vertex has an out-degree of $1$. We then optimize the corresponding vectors using InfoNCE loss with an SGD optimizer and iterate until convergence. Our experimental setup consists of an SGD learning rate of $1$, an InfoNCE loss temperature of $0.5$, and a batch size of $50$. We evaluate two scenarios with different $p$ and $q$ values: $p=1$, $q=0$, and $p=0.75$, $q=0.2$. The results of these experiments are visualized in Figure \ref{fig:vis-spectral-cluster}. The obtained embeddings exhibit the hallmark pattern of spectral clustering of graph $\bfpi$.

\begin{figure}[htb]
\centering
\subfigure{
\includegraphics[width=1\textwidth]{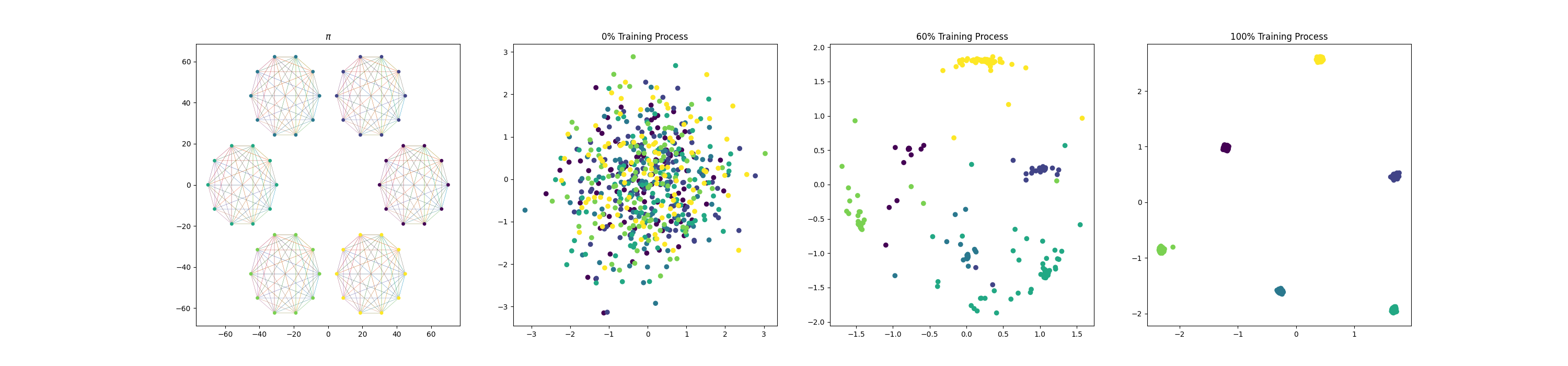}
\label{fig:vis-cluster}
}
\subfigure{
\includegraphics[width=1\textwidth]{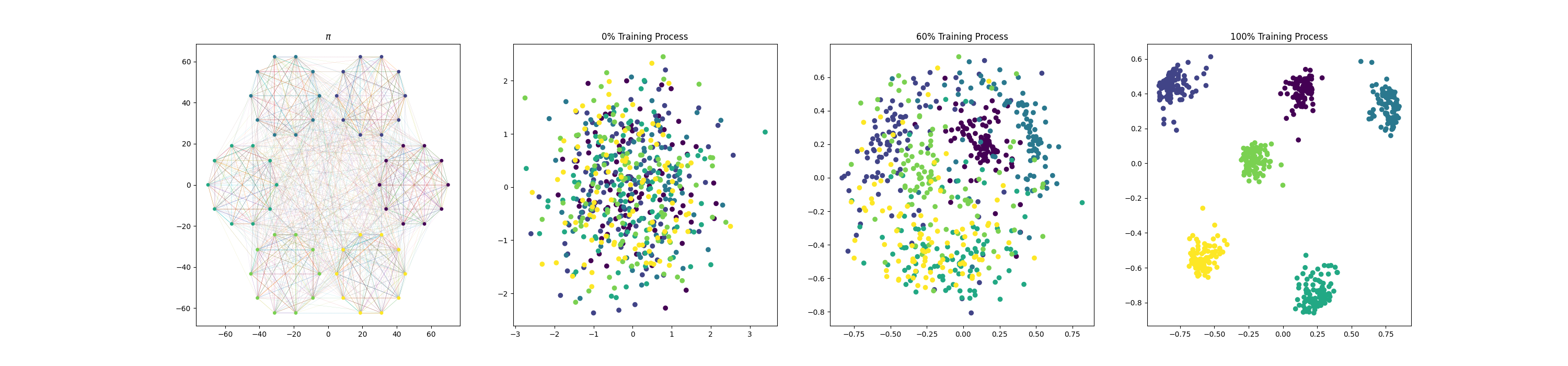}
\label{fig:vis-noised-cluster}
}
\caption{Visualizations of the optimization process using InfoNCE Loss on the vectors corresponding to $\bfpi$. Points of identical color belong to the same cluster within $\bfpi$. To showcase the internal structure of $\bfpi$, we randomly select 10 vertices from each cluster to display the edge distribution of $\bfpi$.}
\label{fig:vis-spectral-cluster}
\end{figure}

\end{document}